\newcommand{\cmark}{\textcolor{green!60!black}{\ding{51}}} 
\newcommand{\xmark}{\textcolor{red}{\ding{55}}}           
\newtheorem{theorem}{Theorem}
\newtheorem{proposition}[theorem]{Proposition}
\newtheorem{lemma}{Lemma}
\newtheorem{remark}{Remark}
\newtcolorbox{mybox}[2][]{colbacktitle=red!10!white, colback=blue!10!white,coltitle=red!70!black, title={#2},fonttitle=\bfseries,#1}
\title{\textsc{CoVeR}: Conformal  Calibration for Versatile and Reliable Autoregressive Next-Token Prediction}
\author{%
  Yuzhu Chen\textsuperscript{1}\thanks{Both authors contributed equally.}
  \quad
  Yingjie Wang\textsuperscript{2}\footnotemark[1]
  \quad
  Shunyu Liu\textsuperscript{2}
  \quad
  Yongcheng Jing\textsuperscript{2}
  \quad
  Dacheng Tao\textsuperscript{2}\\
  $^1$University of Science and Technology of China\quad
  $^2$Nanyang Technological University
 }
\begin{document}

\maketitle

\begin{abstract}
Autoregressive pre-trained models combined with decoding methods have achieved impressive performance on complex reasoning tasks. While mainstream decoding strategies such as beam search can generate plausible candidate sets,  they often lack provable coverage guarantees, and struggle to effectively balance search efficiency with the need for versatile trajectories, particularly those involving long-tail sequences that are essential in certain real-world applications. To address these limitations, we propose \textsc{CoVeR}, a novel model-free decoding strategy wihtin the conformal prediction framework that simultaneously maintains a compact search space and ensures high coverage probability over desirable  trajectories. Theoretically, we establish a PAC-style generalization bound, guaranteeing that \textsc{CoVeR} asymptotically achieves a coverage rate of at least $1 - \alpha$ for any target level $\alpha \in (0,1)$. 
\end{abstract}

\section{Introduction}
Autoregressive pre-trained  models have emerged as state-of-the-art tools for natural language generation. In particular, the development of autoregressive pre-trained models for reasoning such as OpenAI’s o-[$n$] series \citep{jaech2024openai,o3_openai} and DeepSeek’s R1 \citep{guo2025deepseek} has sparked a growing body of research on Chain-of-Thought (CoT) reasoning. These models have demonstrated significant improvements in mathematical reasoning, programming tasks, and commonsense inference \citep{sun2023survey, Wei-NeurIPS-models-2022}. Remarkably, some have even been reported to exhibit behavior consistent with passing the Turing Test  \citep{jones2025large}. During generation, the decoding strategy plays a critical role in shaping the quality of sequence generation in autoregressive pre-trained models. Common approaches include greedy decoding, beam search \citep{xie2023self, zhu2024deductive}, and various enhanced variants \citep{sridhar2022improved, xie2023self, zhao2022calibrating, chan2025efficient, yang2024language}. Despite their impressive empirical success, these methods often lack provable coverage guarantees and fail to flexibly capture long-tail reasoning trajectories that are critical in specific real-world scenarios. 

In statistical learning community,  \emph{Conformal Prediction} (CP) offers a promising framework for achieving  provable coverage guarantee. By using an external calibration set, CP  produce prediction sets whose size adapts to the data while guaranteeing finite-sample, model-agnostic, and distribution-free coverage  \citep{shafer2008tutorial, balasubramanian2014conformal, tibshirani2019conformal, ding2023class}.  This characteristic makes CP particularly appealing for generative autoregressive models, where verifying the correctness of an output from a set is often easier than generating the correct output directly \citep{huang-etal-2023-large,wang2022selfa}.

\begin{table}[htp!]
\centering
\begin{adjustbox}{max width=\textwidth}
\begin{tabular}{lcccccc}
\hline
Method & Long-tail Seq. & Global Cov. & Local Cov.  & Compact Search  & PAC Bound \\
\hline
\cite{ravfogel2023conformal}    & \xmark & \cmark & \cmark  & \xmark  & \xmark \\
 \cite{ren2023robots}         & \xmark & \cmark & \xmark & \cmark & \xmark  \\
\cite{quach2023conformal}       & \xmark & \cmark & \xmark & \cmark & \cmark\\
 \cite{deutschmann2024conformal} & \xmark & \cmark & \cmark & \xmark & \xmark\\
\cite{tang2023less} & \cmark & \xmark & \xmark & \xmark & \xmark\\
 \cite{li2023search} & \cmark & \xmark & \xmark & \cmark & \xmark\\
Our Method      & \cmark & \cmark & \cmark & \cmark & \cmark  \\
\hline
\end{tabular}
\end{adjustbox}
\caption{Comparison of decoding and conformal prediction methods across different properties. }
\label{summarization}
\end{table}

Recent advances in applying conformal prediction to sequence generation have primarily focused on sampling-based approximations \citep{quach2023conformal} and human-assisted pruning strategies \citep{ren2023robots}, both providing sequence-level coverage guarantees. Notably, \cite{deutschmann2024conformal} introduced an extension of conformal prediction to token-level autoregressive generation.  Although these methods achieve both empirical and theoretical success in conformal sequence generation, they exhibit a tendency to prioritize mainstream outputs while neglecting long-tail sequences that may be essential in  specific real-world scenarios.


To address this gap, we propose \textsc{CoVeR}, a \textbf{Co}nformal  Calibration for \textbf{Ve}rsatile and \textbf{R}eliable autoregressive next-token prediction.  Our main contributions are summarized as follows:
\begin{itemize}
    \item \emph{Algorithmic Design.}  
 We propose \textsc{CoVeR}, a novel model-free decoding strategy that integrates conformal prediction into a dual-objective optimization framework.  This framework simultaneously maintain a compact search space and enables pretrained models to selectively generate either high-frequency or long-tail sequences with provable accuracy.
    \item \emph{Theoretical Guarantees.} We derive a PAC-style generalization bound for \textsc{CoVeR}, providing finite-sample coverage guarantees at both the global (full-sequence) and local (step-wise) levels of sequence generation.
\end{itemize}

\section{Related Work}
\paragraph{Conformal Prediction for Sequence Generation.}
The complexity of sequence generation makes applying CP to autoregressive models nontrivial. Early work addressed constrained tasks such as single-token prediction \citep{ravfogel2023conformal, ding2023class, tibshirani2019conformal}, while more recent efforts have extended CP to full sequence generation \citep{ren2023robots, quach2023conformal, deutschmann2024conformal}. From a generative modeling perspective, \cite{ren2023robots} constructs a product of token-level sets for task planning, suitable for deterministic outputs, while \cite{quach2023conformal} uses sampling-based decoding with semantic filtering via Learn-then-Test \citep{angelopoulos2021learn} to support sequence-level accept/reject decisions. However, their approach contrasts with our token-by-token calibration. Closest to our setting, \cite{deutschmann2024conformal} studies dynamic conformal sets for each token generation, but our method differs in three key aspects: (i) it flexibly supports both frequent and long-tail  paths; (ii) it introduces dual-objective optimization objective to jointly optimize global and local noncoverage rate, avoiding the exponential coverage decay $(1-\alpha)^L$ in \citep{deutschmann2024conformal}; and (iii) it provides data-dependent PAC-style generalization bounds, offering tighter  guarantees.

\paragraph{Decoding Strategy for Pre-trained Model Reasoning.}
In autoregressive pre-trained models, the decoding strategy plays an important role in determining sequence quality. Greedy decoding, though simple and efficient, often fails to recover from early suboptimal choices, leading to incoherent sequences \citep{chiang2021relating}. Beam search mitigates this issue by maintaining multiple partial hypotheses based on cumulative log probabilities, producing more accurate and coherent outputs \citep{xie2023self,zhu2024deductive}. Recent extensions to beam search aim to improve generation reliability \citep{sridhar2022improved,xie2023self,zhao2022calibrating,chan2025efficient,yang2024language}, such as stepwise self-evaluation \citep{xie2023self} and sequence likelihood calibration \citep{zhao2022calibrating}.  Nevertheless, these approaches still offer only limited flexibility in exploring diverse sequences \cite{chen2023say}, motivating an increasing work on more versatile decoding strategies \citep{minsky1997negative, chen2023say, li2023search}.  \cite{tang2023less}  introduces a contrastive learning scheme to elicit outputs that are plausible yet less likely, while \cite{li2023search} proposes a variable-wise prompting framework to systematically generate factually correct but long-tail inferential statements. In parallel, other studies focus on long-tail data generation for probing large models:  \cite{zhou2020rica} creates self-contained commonsense statements featuring novel entities, and  \cite{arnaout2022uncommonsense} evaluates models on rare negative knowledge.  However, these approaches often require careful rollout design and lack provable theoretical coverage guarantees like comformal prediction, limiting their scalability and reliability in real-world applications.

A comprehensive comparison of our method against representative decoding and conformal prediction approaches is presented in Table~\ref{summarization}, highlighting key differences across different dimensions. Here, Long-tail Seq. indicates support for long-tail sequence generation; Global Cov. denotes sequence-level global coverage guarantees; Local Cov. reflects token-level (stepwise) coverage guarantees; Compact Search indicates maintenance of a compact decoding search space via some regulation techniques; PAC Bound denotes the presence of a PAC-style theoretical guarantee.

\section{Preliminary}
Suppose each input variable $X \in \mathcal{X}$ is associated with a class label $Y \in \mathcal{Y}$, where $\mathcal X$ and $\mathcal{Y}$ are input and output space, respectively. Let $D=\{(x_i, y_i)\}_{i=1}^N$ be a calibration dataset consisting of $N$ samples drawn from an unknown distribution. Given a new test point $(X, Y)$ drawn from the same distribution, the goal of conformal prediction is to construct a prediction set $\mathcal{C}(X)$ such that the coverage condition ${\rm Pr}(Y\in \mathcal{C}(X))\geq 1-\alpha$ is satisfied for a user-specified confidence level $\alpha \in [0, 1]$. Let $\sigma : \mathcal{X} \times \mathcal{Y} \to \mathbb{R}$ denote a conformal score function. The score function is typically derived from a pre-trained classifier $f$,  e.g., $\sigma(x, y) = f_y(x)$, where $f_y(x)$ is the $y$-th softmax output of $f$ on $x$.  For brevity, let $r_i = \sigma(x_i, y_i)$ be the score of the $i$-th calibration data point. For $\tau \in [0,1]$ and a finite set $A \subseteq \mathbb{R}$, let $\mathrm{Quantile}(\tau, A)$ denote the smallest $a \in A$ such that the fraction of elements in $A$ that are bigger than or equal to $a$ is at least $\tau$. If no such $a$ exists, set $\mathrm{Quantile}(\tau, A) = +\infty$.

\paragraph{Conformal Prediction} The standard conformal prediction sets are given by:
\begin{equation}\label{classic_conformal}
\mathcal{C}(X) = \{ y \in \mathcal{Y}: \sigma(X, y) \geq \hat{q} \}~~~{\rm where}~~~    \hat{q} = \mathrm{Quantile}\left( \frac{(N+1) \alpha}{N}, \{r_i\}_{i=1}^N \right).
\end{equation}

The conformal prediction set provides finite-sample coverage guarantees, i.e., Pr$(Y\in\mathcal{C}(X))\geq1-\alpha, \forall  \alpha\in (0,1)$. Recently, it has been successfully extended to autoregressive inference methods, such as Beam Search \citep{deutschmann2024conformal}.






\paragraph{Conformal Prediction for Beam Search.} Given a pair $(X, \mathbf S)$, where $\mathbf S$ is a structured output sequence instead of one-dimensional label, we wish to produce a conformal set $\mathcal{C}(X)$ of candidate sequences such that ${\rm Pr}(\mathbf S \in \mathcal{C}(X))\geq 1-\alpha$. A straightforward strategy begins by performing  beam search to obtain a reasonably-sized set of proposals $\beta(X)$, and to then predict $\mathcal{C}(X) \subset \beta(X)$.  Define the in-beam subgroup of the calibration data  by  
$$
\mathcal{B} = \{(x_i, \mathbf{s}_i) \in D \mid \mathbf{s}_i \in \beta(x_i)\}.
$$
By performing split-CP calibration on this subgroup, a threshold $\hat q$ is obtained by calculating the quantile as shown in Eq. \eqref{classic_conformal}. At inference time, given a test sample $(X, \mathbf{S})$, the prediction set is defined by 
\begin{equation}
    \mathcal{C}(X) = \{\mathbf S' \in \beta(X) \mid \sigma(X, \mathbf S') \geq \hat q\}.
\end{equation}
On this prediction set, the coverage property is guaranteed by  following:
\begin{lemma}
\textit{With probability at least \( 1 - \delta,~ 0<\delta<1,\)} there holds
\[
\mathbb{P}\left(\mathbf S \in \mathcal{C}(X)\right) \geq (1 - \alpha)\, B\left(\delta; |\mathcal{B}|, N + 1 - N_\beta\right),
\]
\textit{where \( B(a, b) \) is a beta distribution and \( B(\delta; a, b) \) is its \( \delta \)-quantile.}
\end{lemma}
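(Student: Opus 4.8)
The plan is to view the whole construction as a two-stage randomness problem. First, conditioning on the unseen test point $(X,\mathbf S)$, the beam set $\beta(X)$ is determined, and the calibration points that land in-beam form the subgroup $\mathcal B$; the number $|\mathcal B|$ is itself random, governed by how many of the $N$ calibration draws fall in their respective beams. Second, conditionally on $\mathcal B$ (equivalently, on $|\mathcal B|=m$), the split-conformal threshold $\hat q$ computed from the $m$ in-beam scores $\{r_i : i\in\mathcal B\}$ enjoys the usual exchangeability-based guarantee with respect to the test score $\sigma(X,\mathbf S)$, \emph{restricted to the event that the test point is itself in-beam}. I would therefore decompose
\[
\mathbb P\bigl(\mathbf S\in\mathcal C(X)\bigr)
=\mathbb P\bigl(\mathbf S\in\beta(X)\bigr)\cdot
\mathbb P\bigl(\sigma(X,\mathbf S)\ge\hat q \,\big|\, \mathbf S\in\beta(X)\bigr),
\]
since $\mathcal C(X)\subseteq\beta(X)$, and handle the two factors separately.

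For the second factor, I would invoke the standard split-CP argument: given $|\mathcal B|=m$, the test point (conditioned on being in-beam) and the $m$ in-beam calibration points are exchangeable, so $\mathrm{Quantile}\bigl(\tfrac{(m+1)\alpha}{m},\{r_i\}_{i\in\mathcal B}\bigr)$ yields $\mathbb P(\sigma(X,\mathbf S)\ge\hat q\mid \mathbf S\in\beta(X), |\mathcal B|=m)\ge 1-\alpha$; averaging over $m$ gives $\ge 1-\alpha$ for this factor unconditionally. For the first factor, I would bound $\mathbb P(\mathbf S\in\beta(X))$ from below by a Beta quantile. The idea is that $p:=\mathbb P(\mathbf S\in\beta(X))$ is the in-beam probability, and $|\mathcal B|$ is (conditionally) Binomial$(N,p)$; equivalently, writing $N_\beta$ for the count of calibration points \emph{not} in-beam, the observed statistic lets us form a one-sided confidence bound on $p$. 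Using the Clopper–Pearson / Beta-quantile relationship $\sum_{k\le j}\binom{N}{k}p^k(1-p)^{N-k}$ expressed through an incomplete Beta function, with parameters $|\mathcal B|$ and $N+1-N_\beta$, gives that with probability at least $1-\delta$, $p\ge B(\delta;|\mathcal B|,N+1-N_\beta)$. Multiplying the two factors yields the claimed bound.

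The step I expect to be the main obstacle is making the exchangeability argument for the second factor fully rigorous in the presence of the \emph{data-dependent, randomly-sized} subgroup $\mathcal B$: one must check that conditioning on the \emph{event} $\{\mathbf S\in\beta(X)\}$ together with the \emph{value} $|\mathcal B|=m$ does not destroy the symmetry among the $m+1$ in-beam scores (test plus calibration). The clean way to see this is that the beam map $\beta(\cdot)$ depends only on $X$ (and the fixed pretrained model), not on the labels, so whether a point is in-beam is a function of its own $X$-coordinate; hence the in-beam scores are i.i.d.\ draws from the conditional law of $\sigma(X,\mathbf S)$ given $\mathbf S\in\beta(X)$, and exchangeability is preserved. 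A secondary subtlety is the precise off-by-one bookkeeping in the Beta parameters — reconciling "$N+1-N_\beta$" with $|\mathcal B|=N-N_\beta$ requires the usual $+1$ that appears when one includes the (hypothetical) test point in the exchangeable pool — and I would keep careful track of the $\lceil\cdot\rceil$ in the quantile level to ensure the conformal step and the Beta-bound step compose without loss.
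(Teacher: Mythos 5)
The paper never supplies a proof of Lemma~1 --- it is restated from \citet{deutschmann2024conformal} without derivation and without even defining $N_\beta$ --- so there is no paper proof to compare you against; I can only assess the argument on its own terms.

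Your skeleton is the right one. The factorization $\mathbb P(\mathbf S\in\mathcal C(X))=\mathbb P(\mathbf S\in\beta(X))\cdot\mathbb P(\sigma(X,\mathbf S)\ge\hat q\mid\mathbf S\in\beta(X))$ is valid because $\mathcal C(X)\subseteq\beta(X)$; your observation that in-beam membership is a per-point measurable function of $(X,\mathbf S)$ alone (so conditioning on the membership pattern leaves the in-beam points i.i.d.\ from the conditional law, hence exchangeable) is exactly the justification needed to apply split-CP to the subgroup and obtain $\ge 1-\alpha$ for the second factor; and Clopper--Pearson is the right device to turn the binomial count $|\mathcal B|$ into a lower confidence bound on the in-beam probability $p$.

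The concrete flaw is your reading of $N_\beta$. You declare it to be the number of calibration points \emph{not} in beam, so that $N+1-N_\beta=|\mathcal B|+1$, giving a Beta$\bigl(|\mathcal B|,|\mathcal B|+1\bigr)$ law whose mean is essentially $\tfrac12$ regardless of the data --- that is not a lower confidence bound on $p$ and does not track the empirical in-beam fraction at all. The Clopper--Pearson lower bound you actually derive from the Binomial--Beta duality is the $\delta$-quantile of Beta$\bigl(|\mathcal B|,\,N-|\mathcal B|+1\bigr)$, which matches the lemma's $B(\delta;|\mathcal B|,N+1-N_\beta)$ only if $N_\beta=|\mathcal B|$, i.e.\ $N_\beta$ is the size of the in-beam subgroup, not its complement. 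So you derive the correct Beta from first principles and then mislabel the notation, producing an internal contradiction with your own sketch. One further point to make explicit when writing this up: the split-CP guarantee on the second factor is an \emph{expectation} over the calibration draw, while the Beta bound on $p$ is a \emph{high-probability} statement over the same draw. The ``with probability at least $1-\delta$'' in the lemma governs only the validity of the data-dependent quantity $B(\delta;\cdot,\cdot)$ as a lower bound for the deterministic constant $p$; the marginal coverage on the left side is not itself a random variable. Stating this dichotomy cleanly is what lets the two factors compose without double-counting the $\delta$.
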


However, this approach only works if one can ensure a valid coverage guarantee over the set $\beta(X)$ produced by beam search. Moreover, it functions merely as a post-hoc verifier for beam search and does not guide the construction of prediction sets at each step. In addition, \cite{deutschmann2024conformal} propose a dynamic conformal set for beam search, which adaptively adjusts the beam size at each decoding step based on a conformal prediction threshold.

\textbf{Dynamic Conformal Beam Search.} Consider a family of conformal scores  that can be evaluated on length-$l$ sequences, denoted by $r_i^{1:l}=\sigma(x_i, \mathbf{s}_i^{1:l})$, where $\mathbf{s}_i^{1:l}$ is the length-$l$ truncation of $\mathbf s_i$. Selecting the initial sample set as $\mathcal{B}^{(0)}$ and denoting by $N_0=|\mathcal{B}^{(0)}|$, this method specifies a per-step confidence level $1 - \alpha$ and calibrate iteratively as follows: at the $l$-th step,
\begin{enumerate}
    \item Define $k_\alpha^l = \lfloor (N_{l-1} + 1) \alpha \rfloor$.
    \item Order the calibration set by increasing length-$l$ scores $r^{1:l}_1 \le \dots \le r^{1:l}_{N_{l-1}}$, and define $    \hat{Q}_{l} = r^{1:l}_{k_\alpha^{l}}$
    \item Set $N_l = |\mathcal{B}^{(l-1)}| - k_\alpha^{(l)}$, $\mathcal{B}^{(l)} = \{(x_i, \mathbf s_i)\}_{i < k_\alpha^{(l)}}$.
\end{enumerate}

At inference, proceeding iteratively until all sequences in $\mathcal{C}_\alpha^{(l)}(X)$ terminate or a maximum length $L$ is reached,  the next conformal beam as all continuations in the previous beam that pass the next threshold is defined by:
\begin{equation}
\mathcal{C}^{(l)}(X) = \left\{
\mathbf S^{1:l-1} a \ \middle| \
\begin{array}{l}
a \in A,\ \mathbf S^{1:l-1} \in \mathcal{C}^{(l-1)}(X), \sigma(X, \mathbf S^{1:l-1} a) \ge \hat{Q}_{l}
\end{array}
\right\},
\end{equation}
where $S^{1:0}a:=a$ for the first step $l=1$.
This approach mirrors the traditional beam-search algorithm in that it keeps a set of proposals at each  step and generates a set of high-scoring continuations of the current proposal for the next step with the following theoretical guarantee:  

\begin{lemma}\label{dynamic_beam}
With the maximum length \( L\) of decoding step and exchangeable data \( (X, \mathbf S)\), there holds $\mathbb{P} \left[ \mathbf S \in \mathcal C^{(L)}(X) \right] \geq (1 - \alpha)^{L}$.
\end{lemma}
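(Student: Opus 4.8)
The plan is to prove Lemma~\ref{dynamic_beam} by induction on the decoding step $l$, tracking the event that the true sequence $\mathbf S$ survives in the conformal beam $\mathcal C^{(l)}(X)$ through the first $l$ steps. The key observation is that at each step the construction in step~2 is exactly a split-conformal calibration applied to the \emph{current} surviving calibration subgroup $\mathcal{B}^{(l-1)}$ using the length-$l$ score $r^{1:l}$, so by the standard exchangeability argument underlying Eq.~\eqref{classic_conformal} the probability that a fresh exchangeable point clears the threshold $\hat Q_l$ is at least $1-\alpha$ --- but only \emph{conditionally} on having entered step $l$ at all, i.e.\ on $\mathbf S^{1:l-1}\in\mathcal C^{(l-1)}(X)$ and on the corresponding calibration point $(X_i,\mathbf s_i)$ lying in $\mathcal{B}^{(l-1)}$.

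Concretely, I would first set up notation: let $E_l$ denote the event $\{\mathbf S^{1:l}\in\mathcal C^{(l)}(X)\}$, and note that because the beam is built by extending only surviving proposals, $E_l\subseteq E_{l-1}$, so $\mathbb P(E_L)=\mathbb P(E_1)\prod_{l=2}^{L}\mathbb P(E_l\mid E_{l-1})$. The base case is $\mathbb P(E_1)\ge 1-\alpha$: here $\mathcal C^{(1)}(X)=\{a\in A:\sigma(X,a)\ge\hat Q_1\}$ with $\hat Q_1$ the $k_\alpha^1$-th order statistic among the $N_0$ calibration points, and exchangeability of $(X,\mathbf S)$ with the calibration data gives $\mathbb P(\sigma(X,\mathbf S^{1:1})\ge\hat Q_1)\ge 1-\lfloor(N_0+1)\alpha\rfloor/(N_0+1)\ge 1-\alpha$, by the usual rank argument for split conformal prediction. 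For the inductive step I would argue that, conditioned on $E_{l-1}$, the true point is among the calibration points that passed all previous thresholds --- i.e.\ it behaves like a generic element of $\mathcal{B}^{(l-1)}$ --- and the remaining calibration points in $\mathcal{B}^{(l-1)}$ together with the test point are still exchangeable (conditioning on surviving is a symmetric event in the indices), so the same split-conformal rank bound yields $\mathbb P(\sigma(X,\mathbf S^{1:l})\ge\hat Q_l\mid E_{l-1})\ge 1-\alpha$; combined with $\mathbf S^{1:l-1}\in\mathcal C^{(l-1)}(X)$ on $E_{l-1}$ and the definition of $\mathcal C^{(l)}(X)$ this gives $\mathbb P(E_l\mid E_{l-1})\ge 1-\alpha$. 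Multiplying the $L$ factors yields $\mathbb P(E_L)\ge(1-\alpha)^L$, and since $\mathbf S\in\mathcal C^{(L)}(X)$ is implied once decoding terminates (all surviving sequences either reach length $L$ or end earlier, with $\mathbf S$ among them on $E_L$), the claim follows.

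The main obstacle I anticipate is making the conditional exchangeability argument in the inductive step fully rigorous: the subgroup $\mathcal{B}^{(l-1)}$ is \emph{data-dependent} (it is the random set of points that cleared the earlier thresholds), so one must verify that conditioning on "the test point and calibration point $i$ both survived through step $l-1$" preserves enough symmetry among the relevant indices for the $k_\alpha^l$-th order statistic to act as a valid conformal quantile. The clean way to handle this is to condition on the \emph{unordered multiset} of scores (or on the set of survivors as a set, not a sequence) so that the index of the test point is uniform among the survivors, which is precisely the hypothesis needed; I would invoke exchangeability at each level rather than trying to unroll the joint distribution. A secondary, more bookkeeping-level point is the floor in $k_\alpha^l=\lfloor(N_{l-1}+1)\alpha\rfloor$ and the resulting "$+1$" corrections --- one must check these go the right way so that each step loses at most a factor $(1-\alpha)$ and never more; this is routine but worth stating carefully. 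No finite-sample $\delta$ slack appears here because, unlike the post-hoc beam-verification lemma, the beam itself is defined by the conformal thresholds, so there is no need to control $|\mathcal B|$ separately.
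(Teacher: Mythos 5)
The paper does not prove Lemma~\ref{dynamic_beam}; it appears in the preliminaries as a result quoted from \cite{deutschmann2024conformal} with no argument, so your attempt cannot be compared to an internal proof. Your plan is the standard and correct one: telescope $\mathbb{P}(E_L)=\mathbb{P}(E_1)\prod_{l=2}^{L}\mathbb{P}(E_l\mid E_{l-1})$, bound each factor by $1-\alpha$ via a split-conformal rank argument, and use that $k_\alpha^l=\lfloor(N_{l-1}+1)\alpha\rfloor\le(N_{l-1}+1)\alpha$ so each factor is at least $1-\alpha$.

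You correctly flag conditional exchangeability of the survivors as the crux, but the resolution should be written against a more specific conditioning event than the aggregate $E_{l-1}$, since $E_{l-1}$ fixes neither the identity of the surviving calibration points nor the thresholds. Condition instead on the full survival trajectory: the surviving calibration index sets $V'_1\supseteq\cdots\supseteq V'_{l-1}$ together with the step-scores of every dropped point. The structural fact that makes this work is that each threshold $\hat Q_k$ is the maximum step-$k$ score among the points dropped at step $k$, hence a function of the dropped data alone; given the trajectory, the conditioning event then states that every index in $\{0\}\cup V'_{l-1}$ (test plus surviving calibration) has, for each $k\le l-1$, a step-$k$ score exceeding a fixed quantity, and this is invariant under permutations of $\{0\}\cup V'_{l-1}$. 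Exchangeability of the original $N_0+1$ tuples therefore gives conditional exchangeability of the surviving tuples, the rank bound applies to their step-$l$ scores with the $(N_{l-1}+1)$-correction built into $k_\alpha^l$, and averaging over trajectories yields $\mathbb{P}(E_l\mid E_{l-1})\ge 1-\alpha$. Two things to state explicitly: (i) you implicitly assume no ties or randomized tie-breaking, since the test point is admitted via $\sigma\ge\hat Q_l$ while retained calibration points satisfy $\sigma>\hat Q_l$, and the symmetry of the conditioning event only holds up to this; and (ii) the paper's line $\mathcal B^{(l)}=\{(x_i,\mathbf s_i)\}_{i<k_\alpha^{(l)}}$ is an apparent typo --- the retained set must be the top $N_{l-1}-k_\alpha^l$ scores, consistent with $N_l=N_{l-1}-k_\alpha^l$ --- and your argument should be written against the intended construction.
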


Despite offering conformal validity for each step, existing conformal  methods for multi-step next-token generation still exhibit certain limitations in terms of feasibility and practical reliability: a) the overall performance is highly dependent on the quality of the beam search itself. For example, long-tail solutions may be prematurely discarded in early steps due to suboptimal intermediate scores; b) the cumulative coverage guarantee, given by $(1-\alpha)^L$ (cf. Lemma~\ref{dynamic_beam}), becomes increasingly weak as the maximum reasoning length  $L$ grows.

\section{CoVeR: Conformal Validity for Multi-step Reasoning}
In this section, Section \ref{sect_coveragedecomposition} decomposes the global sequence-level non-coverage rate into a weighted sum of local non-coverage terms, motivating the core idea that flexible generation can be achieved by controlling these local guarantees. Sections \ref{sect_clustering}–\ref{sect_optimizationobjective} describe the core methodologies, including the token-based clustering procedure that defines local regions and introduce the resulting optimization objective. Section \ref{sect_optimization} then develops an efficient algorithm for solving this objective. Finally, Section \ref{sect_pacbound} provides a PAC-style bound, demonstrating finite-sample guarantees and asymptotic convergence of the proposed approach.

\subsection{Non-Coverage Rate Decomposition}\label{sect_coveragedecomposition}

Let the input be \( X \in \mathcal{X} \), and let the maximum-length-$L$ reasoning sequence be $\mathbf{S}:= \mathbf S^{1:L}  = (S^1 \cdots S^L) \in \mathcal{S}$, where sequence is terminated by a disjoint token, and each token \( S^l \) lies in a \( |\mathcal{Y}| \)-dimensional classification space.  Our main goal is to construct a conformal set \( \mathcal{C}(X) \) such that it satisfies a user-specified coverage criterion, either $\mathbb{P}[\mathbf{S} \notin \mathcal{C}(X)] < \alpha$ or equivalently $\mathbb{P}[\mathbf{S} \in \mathcal{C}(X)] > 1 - \alpha$ for any  \( \alpha \in (0,1) \).

Constructing the conformal set only from a global coverage perspective limits control over generation diversity, as a single threshold favors high-probability patterns and overlooks step-level semantics. Local coverage, however, can be defined in various ways. In our approach, we model the conformal scores of sequences that share the same token at step $l$ as an empirical distribution and group tokens with similar distributions into the same local cluster.  This is reasonable since tokens with similar empirical score distributions often reflect comparable semantic uncertainty, and clustering them enables more stable quantile estimation. For instance, high-confidence clusters with sharp score distributions require stricter thresholds for search efficiency, while low-confidence or long-tail clusters with flatter distributions may need looser thresholds to avoid under-coverage.

Formally, at each step \( l \), given a random prefix sequence \( \mathbf{S}^{1:l-1} \), we consider its one-token continuations \( \{\mathbf{S}^{1:l-1}a\}_{a \in \mathcal{Y}} \). Since  the prefix \( \mathbf{S}^{1:l-1} \) are stochastic, each token \( a \in \mathcal{Y} \) at step $l$ induces a conditional distribution over input–prefix pairs, denoted by $\sigma(X, \mathbf{S}^{1:l-1}a) \,\big|\, S^l = a$.
This distribution captures the model's belief about the plausibility of contexts that lead to the selection of token \( a \) at step \( l \), reflecting variability over the distribution of input–prefix paths conditioned on \( S^l = a \). 
For instance, if this distribution is sharply concentrated over a narrow set of contexts, it indicates that the model selects token \( a \) consistently and with high confidence. We then cluster the tokens at step $l$ according to the similarity of their score distributions:
\[
h^{*}_{l}(a)=h^{*}_{l}(a')
\;\;\Longleftrightarrow\;\;
{\rm Divergence}\!\left(\sigma\!\bigl(x,\mathbf{S}^{1:l-1}a\bigr),
          \sigma\!\bigl(x,\mathbf{S}^{1:l-1}a'\bigr)\right)\le\delta,
\]
where \(h^{*}_{l}:\mathcal{Y}\to\mathcal{M}_{l}\) is the oracle clustering map,  
\({\rm Divergence}(\cdot,\cdot)\) is a statistical divergence, and \(\delta\) is a tolerance threshold.  Each $l$-th step \(S^l\) is assigned to a unique cluster
\(m=h^{*}_{l}(S^{l})\in\mathcal{M}_{l}\).


Building on the above discussion, we  can define the step-cluster failure event by
\[
E_{l,m}:=\Bigl\{
h^{*}_{l}(S^{l})=m,\;
\mathbf{S}^{1:l}\notin\mathcal{C}^{(l)}(X;\beta),\;
\mathbf{S}^{1:k}\in\mathcal{C}^{(k)}(X;\beta)\;\forall\,k<l
\Bigr\}.
\]
The collection \(\{E_{l,m}\}_{l,m}\) forms a disjoint partition of the full non‑coverage event.  Consequently,
\[
\mathbb{P}\!\bigl[\mathbf{S}\notin\mathcal{C}^{(L)}(X;\beta)\bigr]
\;=\;
\sum_{l,m}
\mathbb{P}\!\bigl[h^{*}_{l}(S^{l})=m\bigr]\,
\mathbb{P}\!\bigl[(X,\mathbf{S}^{1:l})\in E_{l,m}\mid h^{*}_{l}(S^{l})=m\bigr].
\]
\begin{remark}
This decomposition highlights how each cluster-step pair contributes to the overall non-coverage risk. The probability \( \mathbb{P}\bigl[h^{*}_{l}(S^{l}) = m\bigr] \) captures how often a token falls into cluster \( m \) at step \( l \), while the conditional term quantifies the non-coverage rate within that pair. Under uniform quantile calibration like previous conformal prediction methods, high-frequency clusters dominate the global objective, while long-tail clusters have little influence despite potentially high error rates. As a result, conformal sets calibrated this way often sacrifice coverage in low-frequency regions. This motivates our cluster-step–aware calibration strategy.
\end{remark}

This decomposition motivates the design of \textsc{CoVeR}, which consists of three key components (as shown in Figure \ref{fig:enter-label}):  \textit{conformal score computation}, \textit{clustering learning} and \textit{quantile-threshold estimation}, both of which are carried out using a calibration dataset \( D = \{(x_i, \mathbf{s}_i)\}_{i=1}^N\).  We begin by randomly splitting the calibration dataset of size \( N \) into two disjoint parts: the \textit{clustering dataset} \( D_1 = \{(x_i, \mathbf{s}_i) : i \in \mathcal{I}_1\} \), used for performing class and sequence clustering, and the \textit{proper calibration dataset} \( D_2 = \{(x_i, \mathbf{s}_i) : i \in \mathcal{I}_2\} \), used for computing conformal quantiles. The index sets satisfy \( |\mathcal{I}_1| = \lfloor \gamma N \rfloor \) and \( |\mathcal{I}_2| = N - |\mathcal{I}_1| \), for some tuning parameter \( \gamma \in [0, 1] \).

\begin{figure}
    \centering
    \includegraphics[width=1\linewidth]{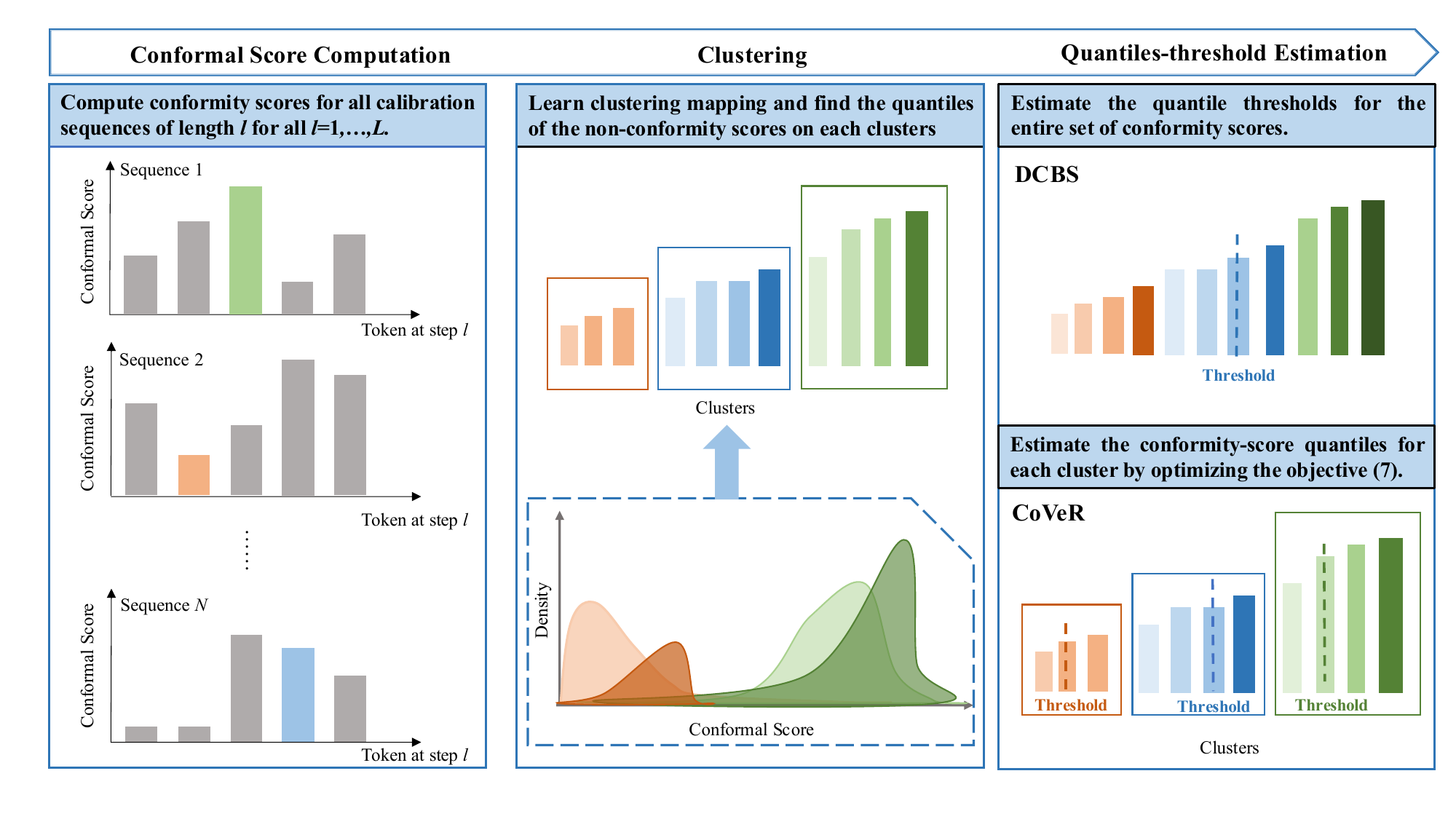}
    \caption{Construction process of  \textsc{CoVeR}.
First, we compute the conformal scores of all length-$l$ calibration sequences for constructing quantile embeddings in Eq. \eqref{eq_quantileembedding}, where the colored segments represent the labels appearing in the sequence at step $l$. Notably, some labels, such as the orange ones, belong to long-tail categories. Second, we treat the conformal scores of all length-$l$ sequences that share a same token at step $l$ as samples from that token’s empirical distribution, and tokens whose distributions have similar shapes are then clustered together (see Eq. \eqref{eq_cluster}). Finally, \textsc{CoVeR} optimizes the objective in~Eq.\eqref{objective} to learn a separate threshold for each cluster, thereby retaining credible portions of long-tail classes (right, lower panel). By contrast, in the right and upper panel, DCBS proposed in \citep{deutschmann2024conformal} applies a single global threshold (i.e., the (1-$\alpha$)-quantile of all conformity scores), which tends to exclude long-tail classes.}
    \label{fig:enter-label}
\end{figure}

\subsection{Distribution-aware Class Clustering}\label{sect_clustering}
To identify class-step cluster with similar uncertainty patterns, we construct quantile embeddings for each class at each step by aggregating empirical score distributions.  Formally,  for each $(y,l)$ pair, the quantile embedding vector \( \mathbf{z}^{y,l} \in \mathbb{R}^{|\mathcal{T}|} \) is defined as:
\begin{equation}\label{eq_quantileembedding}
\mathbf{z}^{y,l} := \left(\mathrm{Quantile}\left(\tau, \{ r_i^{1:l} \mid s_i^l = y \}_{i \in \mathcal{I}_1} \right) \right)_{\tau \in \mathcal{T}},
\end{equation}
where \( \mathcal{T} \subset [0, 1] \) denotes a predefined set of quantile levels (e.g., $\mathcal{T}=(0.5,0.6,0.7,0.8,0.9)$), and \( r_i^{1:l} =\sigma(x_i,\mathbf{s}_i^{1:l})\) represents the conformal score of the sequence $\mathbf{s}_i^{1:l}$. For instance, this metric can be chosen as  the conditional probability mass function
$$
\sigma(X,\mathbf{S}^{1:l}) = p(S^1\mid X) \prod_{k=1}^{l} p(S^k \mid X S^1 \cdots S^{k-1}),~ \forall l \in [L].
$$

For each step $l$, we then perform a weighted \( k \)-means clustering over the set of valid embeddings \( \{ \mathbf{z}^{y,l} \}_{y\in |\mathcal{Y}|} \), using weights \( |\mathcal{I}_1^{y,l}|^{1/2} \) to reflect the reliability of each embedding, resulting in a set of clustering assignment functions:
\begin{equation}\label{eq_cluster}
\hat{h}_l : \mathcal{Y}  \rightarrow \mathcal{M}:= \{1, \dots, M\} \cup \{\texttt{null}\}, \forall l\in[L].
\end{equation}

To ensure reliable quantile estimation, inspired by \citep{ding2023class}, we assign any class-step pair \( (y,l) \) to a designated \texttt{null} cluster if its number of calibration instances falls below a threshold. This prevents unstable or overly noisy quantile estimates due to insufficient data. All \texttt{null} cluster members are grouped together and share a global quantile threshold, ensuring they are still covered under the conformal guarantee while avoiding unreliable local calibration. 
\begin{remark}
In practice, since conformity score distributions vary smoothly across nearby steps, it is unnecessary to calibrate thresholds at each individual position. To address sample sparsity at deeper steps, we adopt a step-bucketed clustering strategy: step indices are grouped into fixed-width buckets (e.g., steps 1–5, 6–10),  calibration samples are aggregated within each bucket before  clustering, and cluster-specific quantiles are shared across all steps in the same bucket. This approach enables stable clustering and reliable quantile estimation of \textsc{CoVeR}.
\end{remark}

\subsection{Optimization Objective for CoVeR}\label{sect_optimizationobjective}
 Denote by \( \mathcal{I}_2^{y,l} = \{i : S_i^l = y\} \) the set of indices corresponding to calibration examples for which the label at step \( l \) is \( y \). Let \( \mathcal{I}_2^l(m) = \bigcup_{\hat{h}_l(y) = m} \mathcal{I}_2^{y,l} \) denote the set of indices in the calibration dataset \( D_2 \) whose step-\( l \) class labels are assigned to cluster \( m \) under the clustering function \( \hat{h}_l \).

Given a set of learnable quantile levels 
$$
\beta = (\beta_{1,1},\cdots,\beta_{\text{null},1},\cdots,\beta_{1,L},\cdots,\beta_{\text{null},L})^T\in\mathbb{R}^{(M+1)L}
$$
for cluster set $\mathcal{M}_l=\{1, \dots, M, \text{null}\}$,  we  define the full-path coverage indicator for  \( i \)-th sample as:
\begin{equation}
\mathcal{L}(x_i,\mathbf{s}_i;\beta) :=\mathbb{I}[\mathbf{s}_i\in \mathcal C^{(L)}(x_i;\beta)]=\prod_{l=1}^L\mathbb{I}[\mathbf{s}_i^{1:l}\in \mathcal C^{(l)}(x_i;\beta)] = \prod_{l=1}^L \mathbb{I}\left\{ \sigma(x_i, \mathbf{s}_i^{1:l}) \geq Q_l(\hat{h}_l(s_i^l);\beta) \right\},
\end{equation}
where the conformal set is 
\begin{equation*}
\mathcal{C}^{(l+1)}(X;\beta) = \left\{
\mathbf S^{1:l} a ~ \middle| ~ \sigma(X, \mathbf{S}^{1:l}a) \geq Q_{l+1}(\hat{h}_{l+1}(a);\beta),~ a \in \mathcal{Y}, ~\ \mathbf{S}^{1:l} \in \mathcal{C}^{(l)}(X;\beta)\right\},
\end{equation*}
and the conformal quantiles are 
\[
Q_l(m;\beta) =  \text{Quantile} \left(   \frac{|\mathcal{I}^l_2(m)| + 1) \beta_{m,l}}{|\mathcal{I}^l_2(m)|} , \{\sigma(x_i, \mathbf{s}_i^{1:l})\}_{i \in \mathcal{I}^l_2(m)} \right)
\]
and 
\[
Q_l(\text{null};\beta) =  \text{Quantile} \left(   \frac{(|\mathcal{I}^l_2| + 1)\beta_{{\rm null},l}}{|\mathcal{I}^l_2|} , \{\sigma(x_i, \mathbf{s}_i^{1:l})\}_{i \in \mathcal{I}_2} \right).
\]
Intuitively, each \( \mathcal L(x_i,\mathbf{s}_i;\beta)\) equals $1$ if  the prediction set that covers the full-path trajectory \( \mathbf{s}_i \).  Here, constraining the full-path coverage rate is essential, as it reflects whether the entire reasoning trajectory remains valid and correct. This global constraint directly aligns with the ultimate task goal. However, it still tends to prioritize high-density sample regions, since these dominate the empirical average. As a result, cluster-step pairs with low sample counts may exhibit high noncoverage but contribute little to the total risk, and thus be ignored during optimization.

To alleviate this problem, we encourage the model to maintain calibration across all local regions by introducing a regularization term over the cluster-step noncoverage rates. Formally,  we formulate the  optimization problem for \textsc{CoVeR}:
\begin{equation}\label{objective}
\max_{\beta}  \quad  \sum_{m \in \mathcal{M}}\sum_{l \in [L]} \left[Q_l(m;\beta)-\lambda_{l,m}\hat{\epsilon}_{l,m}\right], \quad
\text{s.t.}  \quad \frac{1}{|\mathcal{I}_2|} \sum_{i \in \mathcal{I}_2}  \mathcal{L}(x_i,\mathbf{s}_i;\beta) \geq 1 - \alpha,
\end{equation}
where 
$$
\hat{\epsilon}_{l,m} := \frac{1}{|\mathcal{I}_2^l(m)|} \sum_{i\in \mathcal{I}_2^l(m)} \mathbb{I}[\sigma(x_i, \mathbf s_i^{1:l}) < Q_l(m;\beta) ]\prod_{k<l}\mathbb{I}[\sigma(x_i, \mathbf s_i^{1:k}) > Q_l(\hat{h}_k(s_i^k);\beta)]
$$
is the empirical error for $(l,m)$ pair, and the regularization parameter $\lambda_{l,m}$ specifies the weight assigned to controlling the non-coverage rate of the $(l,m)$ step-cluster pair. Given the estimated quantile levels \( \hat{\beta} \) and the cluster-specific quantiles \( Q_l(m;\hat{\beta}), \forall l\in [L-1] \), we establish the conformal prediction set at inference time for the \( l \)-th reasoning step as follows:
\begin{equation}
\hat{\mathcal{C}}^{(l)}(X) = \left\{
\mathbf{S}^{1:l-1} a ~ \middle| ~ \sigma(X, \mathbf{S}^{1:l-1}a) \geq Q_{l}(\hat{h}_{l}(a);\hat \beta),~ a \in \mathcal{Y}, ~\ \mathbf{S}^{1:l-1} \in \hat{\mathcal{C}}^{(l-1)}(X)\right\}.
\end{equation}
\begin{remark}
Conventional class-conditional conformal prediction methods assign the same confidence level $\alpha$ to the conformal set of each class \citep{ding2023class}. However, applying a uniform quantile across all clusters and steps can result in overly conservative prediction sets for easy clusters and undercoverage for harder ones.  In the context of multi-step autoregressive next-token prediction, we instead allow conformal sets for different clusters to have heterogeneous confidence levels. This adaptive calibration is crucial for search efficiency: easier step-cluster pairs can tolerate tighter thresholds through larger $\beta_{l,m}$, while harder step-cluster pairs  may require more relaxed ones.
\end{remark}


\subsection{Optimization Algorithm}\label{sect_optimization}

We formulate the objective problem in Eq. (\ref{objective}) as a constrained optimization over the matrix \( \beta \in [0, 1]^{(M+1) \times L} \), where each entry \( \beta_{m,l} \) determines the quantile used to compute the conformal threshold \( Q_l(m; \beta) \). The objective is to minimize the total size of the prediction set, approximated as maximizing \( \sum_{m,l} Q_l(m;\beta) \), while ensuring that the full-path coverage rate across validation samples is no less than \( 1 - \alpha \). 

To efficiently solve this optimization problem without expensive grid search, we adopt a greedy trade-off algorithm utilized in \citep{zheng2024optimizing} that exploits the convexity and sparsity of the feasible region. We first initialize all entries in \( \beta \) to $0$, except for a selected entry \( \beta_{k,j} \), which is initialized to $1$. This effectively collapses the prediction set to focus on a single cluster-step pair. We then gradually decrease \( \beta_{k,j} \) until the full-path coverage constraint is just satisfied. Once coverage is met, we iteratively improve the efficiency by selecting random cluster \( (m,l) \neq (k,j) \) and performing a trade-off: we increment \( \beta_{k,j} \) slightly and decrement \( \beta_{m,l} \) to maintain coverage. If the new configuration results in a larger objective value $\sum_{m} \sum_{l} \left[ Q_l(m; \beta) - \lambda_{l,m} \hat{\epsilon}_{l,m} \right]$, we accept the update, as it indicates a more compact overall prediction set while still satisfying the desired coverage constraints.

To ensure stability and avoid infinite loops, we impose a finite trade-off budget \( B \), and the trade-off increment \( \epsilon \) is typically set to a small value like \( \epsilon = \frac{1}{|\mathcal{I}_2|} \). When increasing \( \beta_{m, l} \), we make sure that coverage is maintained throughout. This strategy effectively traces a step-wise boundary along the feasible region of valid \( \beta \)-configurations and converges to a highly efficient, approximately optimal quantile assignment \( \hat{\beta} \).

\begin{algorithm}[H]
\caption{Efficient Optimization for \textsc{CoVeR}\label{alg-cover}}
\KwIn{Calibration set $\{(x_i, \mathbf{s}_i)\}_{i \in \mathcal{I}_2}$, non-conformity scores $\{r_i^{1:l}\}_{i \in \mathcal{I}_2, l \in [L]}$, step quantile levels $\mathcal{T}$, initial coordinate $(k,j)$, budget $B$, increment $\epsilon$}
\KwOut{Adaptive quantile levels $\hat{\beta} \in [0,1]^{(M+1) \times L}$}
\vspace{0.6em}
\textbf{Step 1: Quantile Embedding and Clustering} \\
\ForEach{$l \in [L]$}{
    Compute quantile embeddings $\{ \mathbf{z}_i^{l} \}_{i \in \mathcal{I}_1}$ for each class $y \in \mathcal{Y}$: \\
    \Indp $\mathbf{z}_i^{l} \leftarrow \text{Quantile}(\{r_j^{1:l} : s_j^{l} = y \}, \tau \in \mathcal{T})$\\
    Perform weighted $k$-means clustering using embeddings and weights $|\mathcal{I}_y^{l}|^{1/2}$ \\
    Assign cluster labels via $\hat{h}_{l}: \mathcal{Y} \rightarrow \{1, \dots, M\} \cup \{\text{null}\}$ 
}
\vspace{0.6em}
\textbf{Step 2: Quantile Initialization and Optimization} \\
Initialize: $\beta_{l,m} \leftarrow 0$ for all $(l, m) \in [L] \times \mathcal{M} \setminus \{(k,j)\}$, and $\beta_{k,j} \leftarrow 1$ \\
\While{$\frac{1}{|\mathcal{I}_2|} \sum_{i \in \mathcal{I}_2} \mathcal{L}(x_i, s_i; \beta) < 1 - \alpha + \frac{1}{|\mathcal{I}_2|}$}{
\vspace{0.3em}
    $\beta_{k,j} \leftarrow \beta_{k,j} - \frac{1}{|\mathcal{I}_2|}$
\vspace{0.3em}
}

\For{$\text{iter} \leftarrow 1$ \KwTo $B$}{
    Randomly sample $(m,l) \in \mathcal{M} \times [L]$ with $(m,l) \neq (k,j)$ \\
    Set trial quantiles $\beta'_{k,j} \leftarrow \beta_{k,j}$, $\beta'_{m,l} \leftarrow \beta_{m,l}$ \\
    Let $\beta' \leftarrow \beta$ with entries $(k,j)$ and $(m,l)$ replaced by $\beta'_{k,j}$ and $\beta'_{m,l}$ \\
    \While{$\frac{1}{|\mathcal{I}_2|} \sum_{i \in \mathcal{I}_2} \mathcal{L}(x_i, s_i; \beta') > 1 - \alpha$}{
    \vspace{0.3em}
    $\beta'_{m,l} \leftarrow \beta'_{m,l}+\epsilon$
    \vspace{0.3em}
    }
    \If{$\sum_{m} \sum_{l} \left[ Q_l(m; \beta') - \lambda_{l,m} \hat{\epsilon}'_{l,m} \right] > \sum_{m} \sum_{l} \left[ Q_l(m; \beta) - \lambda_{l,m} \hat{\epsilon}_{l,m} \right]$}{
    \vspace{0.3em}
            Update: $\beta_{k,j} \leftarrow \beta'_{k,j},\quad \beta_{m,l} \leftarrow \beta'_{m,l}$
    \vspace{0.3em}
    }
}
\Return $\hat{\beta} \leftarrow \beta$
\end{algorithm}

\subsection{Probably Approximately Correct Guarantees}\label{sect_pacbound}

Our proposed \textsc{CoVeR} takes as input a validation set $Z_{\text{val}} \subseteq \mathcal{Z}$ consisting of $|\mathcal{I}_{2}|$ i.i.d. samples, and outputs a confidence set predictor $\hat{\mathcal C}^{(L)}$. Given $\epsilon, \delta \in \mathbb{R}_{>0}$, we say that \textsc{CoVeR}  is \textit{probably approximately correct (PAC)} if
\begin{equation}\label{pac}
\mathbb{P}_{Z_{\text{val}} \sim \mathcal{D}^n} \left[\mathbb{P}_{(X,\mathbf S)\sim \mathcal{D}}\left[ \mathbb{I}[\mathbf S\in \hat{\mathcal C}^{(L)}(X)]\right]  > 1-\alpha \right] > 1-\delta.
\end{equation}

To establish such a PAC-style generalization guarantee for \textsc{CoVeR}, we address two main challenges. First, because our algorithm enforces coverage at the level of cluster-step pairs, we must decompose the full-sequence non-coverage event into interpretable local components (see Section~\ref{sect_decomposition}). This decomposition enables us to trace global failure back to its localized sources, revealing the contribution of each cluster and reasoning step. Second, following the approach in \citep{park2019pac}, we reformulate the local coverage estimation problem as a binary classification task. This transformation allows us to leverage classical tools from statistical learning theory, such as concentration inequalities, to derive high-confidence upper bounds on the true non-coverage rates (Section~\ref{sect_pac}).

\begin{theorem}
Assume the training and testing data are independently and identically distributed (i.i.d.), and that  clustering algorithm $\hat h_l, \forall l\in[L]$  successfully partition the space of class-step pairs $(y,l)$ into distinct clusters. Denote the empirical cluster probabilities and empirical noncoverage rates by
\[
\hat{p}_{l,m} := \frac{1}{N} \sum_{i=1}^N \mathbb{I}[\hat{h}_l(S_i^l) = m].
\]
Then, with probability at least \( 1 - \delta - \zeta \), where $\delta + \zeta := \sum_{l,m} \delta_{l,m} + \zeta_{l,m}, \forall \delta_{l,m}\in(0,1), \forall \zeta_{l,m}\in(0,1)$,  the full-path failure probability satisfies:
\begin{eqnarray*}
\mathbb{P}_{(X,\mathbf S)\sim\mathcal{D}} \left[\mathbf S \notin \hat{\mathcal{C}}^{(L)}(X) \right]
&\leq&
\alpha + \sum_{l,m} \hat{p}_{l,m}
\left(
\sqrt{ \frac{2\hat{v}_{l,m} \log(3/\delta_{l,m})}{n_{l,m}} }
+ \frac{3\log(3/\delta_{l,m})}{n_{l,m}}
\right)\\
&+& \sum_{l,m} \hat{\epsilon}_{l,m} \sqrt{ \frac{ \log(2/\zeta_{l,m}) }{ 2n_{l,m} } },
\end{eqnarray*}
where \( \hat{v}_{l,m} \) is the empirical variance of the indicator \( \mathbb{I}[(x_i, \mathbf s_i^{1:l}) \in E_{l,m}] \). 
\end{theorem}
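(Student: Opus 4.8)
The plan is to bound the full-path non-coverage probability by combining the exact decomposition from Section~\ref{sect_coveragedecomposition} with two layers of concentration. Recall that the failure events $\{E_{l,m}\}$ form a disjoint partition of $\{\mathbf S\notin\hat{\mathcal C}^{(L)}(X)\}$, so
\[
\mathbb{P}_{(X,\mathbf S)\sim\mathcal{D}}\bigl[\mathbf S\notin\hat{\mathcal C}^{(L)}(X)\bigr]
=\sum_{l,m}\mathbb{P}\bigl[h^{*}_{l}(S^{l})=m\bigr]\,
\mathbb{P}\bigl[(X,\mathbf S^{1:l})\in E_{l,m}\mid h^{*}_{l}(S^{l})=m\bigr].
\]
First I would replace the oracle cluster map $h^{*}_{l}$ by the learned $\hat h_l$; this is exactly where the assumption that the clustering algorithm ``successfully partitions'' the class-step space is used, so the two maps induce the same partition and the sum above is over $\hat h_l$ with empirical cluster frequencies $\hat p_{l,m}$ (a multinomial MLE for $\mathbb{P}[h^*_l(S^l)=m]$). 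I would either absorb the sampling error of $\hat p_{l,m}$ into the stated high-probability event or, more cleanly, bound $\mathbb{P}[h^*_l(S^l)=m]\le \hat p_{l,m}+(\text{deviation})$ and carry the extra term — but since the theorem statement keeps $\hat p_{l,m}$ as a coefficient without a correction, the intended reading is that we condition on $\hat p_{l,m}$ or treat $N$ as large; I would state this explicitly.

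The core of the argument is controlling the conditional terms $\mathbb{P}[(X,\mathbf S^{1:l})\in E_{l,m}\mid \hat h_l(S^l)=m]$. Following \citep{park2019pac}, I would view, for each fixed $(l,m)$, the indicator $W_i:=\mathbb{I}[(x_i,\mathbf s_i^{1:l})\in E_{l,m}]$ restricted to the $n_{l,m}:=|\mathcal I_2^l(m)|$ calibration points in cluster $(l,m)$ as i.i.d. Bernoulli draws whose mean is the target conditional probability. The key subtlety is that $E_{l,m}$ depends on the data-dependent thresholds $Q_k(\cdot;\hat\beta)$ for $k\le l$, so $W_i$ is not a fixed function; here I would invoke the same reduction to binary classification used in \citep{park2019pac}, i.e. treat the whole family of confidence-set predictors indexed by $\beta$ as a hypothesis class and apply a uniform (or ``test-set''–style, since $D_2$ is held out from clustering) concentration. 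Concretely, the empirical Bernstein inequality gives, with probability $1-\delta_{l,m}$,
\[
\mathbb{P}[(X,\mathbf S^{1:l})\in E_{l,m}\mid\hat h_l(S^l)=m]
\le \hat r_{l,m}
+\sqrt{\tfrac{2\hat v_{l,m}\log(3/\delta_{l,m})}{n_{l,m}}}
+\tfrac{3\log(3/\delta_{l,m})}{n_{l,m}},
\]
where $\hat r_{l,m}$ is the empirical frequency of $E_{l,m}$ and $\hat v_{l,m}$ its empirical variance. I would then relate $\hat r_{l,m}$ to the quantities actually controlled by the optimization: the cluster-step empirical error $\hat\epsilon_{l,m}$ differs from $\hat r_{l,m}$ only through the prefix-coverage factor and the exact form of the non-coverage event, and Hoeffding's inequality across the $n_{l,m}$ samples contributes the final term $\hat\epsilon_{l,m}\sqrt{\log(2/\zeta_{l,m})/(2n_{l,m})}$. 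Summing over $(l,m)$ with weights $\hat p_{l,m}$, and using $\sum_{l,m}\hat p_{l,m}\hat\epsilon_{l,m}\le\alpha$ — which holds because the constraint in Eq.~\eqref{objective} forces the weighted empirical non-coverage to be at most $\alpha$ at the returned $\hat\beta$ — yields the leading $\alpha$ plus the two residual sums in the statement.

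Finally I would take a union bound over all $(l,m)$ of the two failure events (Bernstein and Hoeffding), which is precisely the origin of the $\delta+\zeta=\sum_{l,m}(\delta_{l,m}+\zeta_{l,m})$ bookkeeping, and note that letting each $n_{l,m}\to\infty$ (equivalently $N\to\infty$ with every cluster retaining a positive mass fraction) drives both residual sums to zero, giving the claimed asymptotic $1-\alpha$ coverage. The main obstacle I anticipate is the second step: making rigorous the claim that $W_i$ concentrates around the true conditional probability \emph{despite} $E_{l,m}$ being defined through the data-optimized thresholds $\hat\beta$ and $\hat h$. If $\hat\beta$ were fixed in advance this is a one-line Bernstein bound; because it is selected by Algorithm~\ref{alg-cover} on $D_2$ itself, one needs either a uniform-convergence bound over the (finite, since $\beta$ moves on a $1/|\mathcal I_2|$-grid within a bounded budget $B$) reachable set of configurations, or a data-splitting argument isolating the threshold selection — I would flag this and use the finite-grid/bounded-budget structure to pay only a $\log(\text{grid size})$ factor, which can be folded into the $\log(3/\delta_{l,m})$ terms by adjusting constants.
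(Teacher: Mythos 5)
Your overall architecture matches the paper's: decompose the full-path failure into the disjoint cluster-step events $E_{l,m}$, control each $\mathbb{P}[E_{l,m}]$ via concentration, take a union bound over $(l,m)$ to get the $\sum_{l,m}(\delta_{l,m}+\zeta_{l,m})$ budget, and then convert the leading term $\sum_{l,m}\hat p_{l,m}\hat\epsilon_{l,m}$ into $\alpha$ via the coverage constraint in the optimization. The empirical-Bernstein step on the conditional failure rate $\mathbb{P}[E_{l,m}\mid\hat h_l(S^l)=m]$ is also exactly what the paper does.

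Where you go off track is the origin of the third term $\sum_{l,m}\hat\epsilon_{l,m}\sqrt{\log(2/\zeta_{l,m})/(2n_{l,m})}$. You first (correctly) consider bounding the marginal $\mathbb{P}[\hat h_l(S^l)=m]\le\hat p_{l,m}+(\text{deviation})$, but then discard that deviation on the grounds that the theorem keeps $\hat p_{l,m}$ ``without a correction,'' and instead attribute the third term to a Hoeffding step relating $\hat r_{l,m}$ to $\hat\epsilon_{l,m}$. That derivation does not produce the stated bound. In the paper's proof, the factorization $\mathbb{P}[E_{l,m}]=\mathbb{P}[\hat h_l(S^l)=m]\cdot\mathbb{P}[E_{l,m}\mid\hat h_l(S^l)=m]$ is bounded by the product of two one-sided deviations: Hoeffding on the marginal, giving $\hat p_{l,m}+\sqrt{\log(2/\zeta_{l,m})/(2N)}$, and empirical Bernstein on the conditional rate, giving $\hat\epsilon_{l,m}+\sqrt{2\hat v_{l,m}\log(3/\delta_{l,m})/n_{l,m}}+3\log(3/\delta_{l,m})/n_{l,m}$. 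Expanding the product and dropping the small-times-small cross term (which is why the appendix version of the theorem reads $\lesssim$) yields precisely $\hat p_{l,m}\hat\epsilon_{l,m}$ plus $\hat p_{l,m}\cdot(\text{Bernstein slack})$ plus $\hat\epsilon_{l,m}\cdot(\text{Hoeffding slack})$. So the Hoeffding deviation you dropped is exactly the source of your third term; there is no separate $\hat r_{l,m}$-to-$\hat\epsilon_{l,m}$ step, and no extra Bernstein quantity $\hat r_{l,m}$ distinct from $\hat\epsilon_{l,m}$ appears.

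Your worry about the data-dependence of $\hat\beta$ is a fair one and is not resolved in the paper either: the appendix applies empirical Bernstein as if the thresholds $Q_l(m;\hat\beta)$ were fixed, citing the classifier reformulation of \citet{park2019pac} without spelling out a uniform-convergence argument over the reachable $\beta$-grid. Your proposed fix (pay a $\log$ of the grid size over the bounded-budget search, folded into the $\log(3/\delta_{l,m})$ constants) is a reasonable way to close that gap, though it is more careful than what the paper itself provides.
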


\begin{remark}
This bound offers the following key theoretical insights: 1) It explicitly quantifies the contribution of each cluster-step pair \( (l,m) \) to the desirable full-path failure probability, enabling localized understanding and diagnosis of failure behavior;  2) Unlike the exponentially decaying full-path coverage probability (e.g., $(1 - \alpha)^L$ in \citep{deutschmann2024conformal}), our bound accounts for the optimization framework, allowing us to achieve a coverage rate of $1 - \alpha$ that is independent of the next-token prediction length $L$.
\end{remark}




\section{Conclusion}
This paper introduces \textsc{CoVeR}, a novel model-free decoding strategy grounded in the conformal prediction and dual-objective optimization framework. \textsc{CoVeR} simultaneously maintains a compact search space and guarantees high coverage probability over desirable generation trajectories.   Theoretically, we establish a PAC-style generalization bound that guarantees asymptotic coverage at any desired confidence level. 

\bibliography{Refs}
\bibliographystyle{plainnat}

\medskip

\appendix
\newpage

\begin{center}
    \LARGE \textbf{Appendix}
\end{center}

\section{Proof of Theorem 1}

\subsection{Non-coverage Rate Decomposition}\label{sect_decomposition}
To establish the PAC bound in Eq.~\eqref{pac}, we begin by decomposing the full-path non-coverage event into cluster-step aware non-coverage components. Under the autoregressive model, if a partial reasoning prefix \( \mathbf{S}^{1:l} \notin \hat{\mathcal{C}}^{(l)}(X) \), then all subsequent prefixes also fail to be covered, i.e., \( \mathbf{S}^{1:k} \notin \hat{\mathcal{C}}^{(k)}(X) \) for all \( k > l \). This cascading structure allows us to express the full-sequence non-coverage event as a disjoint union of first-step failures, leading to the following decomposition and probability bound.

\begin{proposition}\label{prop_errordecom}
Let \( \hat{\mathcal{C}}^{(l)}(X) \subseteq \mathcal{Y}^l \) be the stepwise conformal prediction set at reasoning step \( l \in [L] \), and let \( \hat{h}_l: \mathcal{Y} \to \mathcal{M}_l \) denote the cluster assignment function that maps each token at  $l$-th step to a unique cluster \( m \in \mathcal{M}_l \).  Then the sequence-level non-coverage event decomposes disjointly as:
\begin{footnotesize}
\begin{eqnarray*}
&&\mathbb{P}_{(X, \mathbf{S}) \sim \mathcal{D}} \left[ \mathbf{S} \notin \mathcal{C}^{(L)}(X;\beta) \right]\\
&=& \sum_{l=1}^L \sum_{m \in \mathcal{M}_l}
\mathbb{P}_{(X, \mathbf{S}) \sim \mathcal{D}} \left[
\underbrace{\hat{h}_l(S^{l}) = m,\ 
\mathbf{S}^{1:l} \notin \mathcal{C}^{(l)}(X;\beta),\
\ \mathbf{S}^{1:k} \in \mathcal{C}^{(k)}(X;\beta),  ~\forall k < l}_{E_{l,m}}
\right].
\end{eqnarray*}
\end{footnotesize}

Suppose for each cluster-step pair \( (l, m) \), the following high-probability bound holds over the calibration set \( Z_{\text{val}} \sim \mathcal{D}^n \):
\[
\mathbb{P}_{Z_{\text{val}} \sim \mathcal{D}^n} \left[
\mathbb{P}_{(X, \mathbf{S}) \sim \mathcal{D}}[E_{l,m}] > \epsilon_{l,m}
\right] \leq \delta_{l,m}.
\]

Then, with probability at least \( 1 - \sum_{l=1}^L \sum_{m \in \mathcal{M}_l} \delta_{l,m} \), we have:
\[
\mathbb{P}_{(X, \mathbf{S}) \sim \mathcal{D}} \left[
\mathbf{S} \notin \mathcal{C}^{(L)}(X;\beta)
\right] \leq \sum_{l=1}^L \sum_{m \in \mathcal{M}_l} \epsilon_{l,m}.
\]
\end{proposition}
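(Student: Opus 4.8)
The plan is to prove Proposition~\ref{prop_errordecom} in two stages: first establish the \emph{exact} disjoint decomposition of the full-path non-coverage event, then apply a union bound to lift the per-pair high-probability risk bounds to a global one.

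\textbf{Stage 1: the decomposition is an exact partition.} The key structural fact is the cascading (monotone-nesting) property of autoregressive conformal sets: by construction, $\mathbf{S}^{1:l}\in\mathcal C^{(l)}(X;\beta)$ only if $\mathbf{S}^{1:l-1}\in\mathcal C^{(l-1)}(X;\beta)$, since each $\mathcal C^{(l)}$ is defined as a set of one-token extensions of elements of $\mathcal C^{(l-1)}$. Equivalently, once a prefix drops out at some step it never re-enters, so the event $\{\mathbf S\notin\mathcal C^{(L)}(X;\beta)\}$ equals the event that there is a \emph{first} step $l\in[L]$ at which $\mathbf S^{1:l}\notin\mathcal C^{(l)}(X;\beta)$ while $\mathbf S^{1:k}\in\mathcal C^{(k)}(X;\beta)$ for all $k<l$. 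I would first write $\{\mathbf S\notin\mathcal C^{(L)}\}=\bigsqcup_{l=1}^L F_l$ where $F_l$ is this ``first failure at step $l$'' event, checking the union is exhaustive (any failing sequence has a well-defined first failing index) and disjoint (the first failing index is unique). Then, since at step $l$ the token $S^l$ lies in exactly one cluster $m=\hat h_l(S^l)\in\mathcal M_l$ (the clustering map is a function into a partition, with $\texttt{null}$ one of the cells), each $F_l$ splits further as $F_l=\bigsqcup_{m\in\mathcal M_l}E_{l,m}$. Summing probabilities over this doubly-indexed disjoint partition gives the claimed identity; no inequality or approximation is involved here.

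\textbf{Stage 2: union bound.} For the probabilistic half, I would introduce the ``good'' event $G_{l,m}:=\{\mathbb P_{(X,\mathbf S)\sim\mathcal D}[E_{l,m}]\le\epsilon_{l,m}\}$, which by hypothesis satisfies $\mathbb P_{Z_{\text{val}}\sim\mathcal D^n}[G_{l,m}^c]\le\delta_{l,m}$. By the union bound, $\mathbb P_{Z_{\text{val}}}\big[\bigcup_{l,m}G_{l,m}^c\big]\le\sum_{l,m}\delta_{l,m}$, so with probability at least $1-\sum_{l,m}\delta_{l,m}$ all the events $G_{l,m}$ hold simultaneously. On that intersection, the exact decomposition from Stage~1 yields
\[
\mathbb P_{(X,\mathbf S)\sim\mathcal D}\!\left[\mathbf S\notin\mathcal C^{(L)}(X;\beta)\right]
=\sum_{l=1}^L\sum_{m\in\mathcal M_l}\mathbb P_{(X,\mathbf S)\sim\mathcal D}[E_{l,m}]
\le\sum_{l=1}^L\sum_{m\in\mathcal M_l}\epsilon_{l,m},
\]
which is the desired conclusion. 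A subtle point worth spelling out: the left side is a fixed functional of the (random) calibration data through $\beta$ and $\hat h_l$, and the $G_{l,m}$ are events on the \emph{same} draw of $Z_{\text{val}}$, so intersecting them and then substituting is legitimate.

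\textbf{Expected main obstacle.} The decomposition itself is essentially bookkeeping; the one place that genuinely needs care is the nesting claim $\mathbf S^{1:l}\in\mathcal C^{(l)}\Rightarrow\mathbf S^{1:l-1}\in\mathcal C^{(l-1)}$, i.e.\ that ``first failure index'' is well defined and failures are absorbing. This is immediate from the recursive definition of $\mathcal C^{(l+1)}(X;\beta)$ as extensions of $\mathcal C^{(l)}(X;\beta)$, but one should also confirm the base case ($l=1$, with the convention $\mathbf S^{1:0}a:=a$) and handle the edge case where a sequence terminates before length $L$ (the terminating token convention from Section~\ref{sect_coveragedecomposition} makes short sequences formally length-$L$, so the argument goes through unchanged). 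I do not anticipate any analytic difficulty; the entire statement reduces to a set-partition identity plus a one-line union bound.
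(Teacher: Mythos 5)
Your proposal is correct and follows essentially the same route as the paper's own proof: decompose the non-coverage event by the first failing step (justified by the nesting of the recursively defined conformal sets), further split by the unique cluster assignment at that step to get the exact disjoint partition $\{E_{l,m}\}$, then apply a union bound over the per-pair high-probability events. The extra care you take with the base case, terminating sequences, and the fact that all $G_{l,m}$ live on the same draw of $Z_{\text{val}}$ is sound bookkeeping that the paper leaves implicit.
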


\begin{proof}
By definition of the full-sequence conformal predictor, a sequence \( \mathbf{S} \) is included in \( \mathcal{C}^{(L)}(X;\beta) \) if and only if all prefixes \( \mathbf{S}^{1:l} \in \mathcal{C}^{(l)}(X;\beta) \) for \( l = 1, \dots, L \). Hence,
\[
\mathbf{S} \notin \mathcal{C}^{(L)}(X;\beta)
\iff \exists\, l \in [L],\ \text{s.t. } \mathbf{S}^{1:l} \notin \mathcal{C}^{(l)}(X;\beta) \text{ and } \forall k < l,\ \mathbf{S}^{1:k} \in \mathcal{C}^{(k)}(X;\beta).
\]

For each such \( l \) and given the reasoning prefix \( \mathbf{S}^{1:l-1} \), each token $S^l$ belongs to a unique cluster \( m = \hat{h}_{l}(S^{l}) \in \mathcal{M}_l \). Define the event:
\[
E_{l,m} := \left\{
\hat{h}_l(S^{l}) = m,\
\mathbf{S}^{1:l} \notin \mathcal{C}^{(l)}(X;\beta),\
\ \mathbf{S}^{1:k} \in \mathcal{C}^{(k)}(X;\beta),~\forall k < l
\right\}.
\]

Then the collection \( \{ E_{l,m} \}_{l,m} \) is a disjoint partition of the non-coverage event. This follows from the autoregressive structure (errors cascade forward) and the uniqueness of cluster assignment per step (i.e., for any given \( \mathbf{S} \) and \( l \), only one cluster \( m \) satisfies \( \hat{h}_l(S^{l}) = m \)). Thus:
\[
\mathbb{I}[\mathbf{S} \notin \mathcal{C}^{(L)}(X;\beta)] = \sum_{l=1}^L \sum_{m \in \mathcal{M}_l} \mathbb{I}[E_{l,m}].
\]
Taking expectation over \( (X, \mathbf{S}) \sim \mathcal{D} \), we get:
\[
\mathbb{P}[\mathbf{S} \notin \mathcal{C}^{(L)}(X)]
= \sum_{l=1}^L \sum_{m \in \mathcal{M}_l} \mathbb{P}[E_{l,m}].
\]

Suppose we can control the non-coverage rate over  each cluster-step pair \( (l, m) \), i.e., the following high-probability bound holds over the calibration set \( Z_{\text{val}} \sim \mathcal{D}^n \):
\[
\mathbb{P}_{Z_{\text{val}}} \left[ \mathbb{P}[E_{l,m}] > \epsilon_{l,m} \right] \leq \delta_{l,m}.
\]

Applying the union bound over all cluster-step pairs \( (l, m) \), we obtain:
\[
\mathbb{P}_{Z_{\text{val}}} \left[
\sum_{l,m} \mathbb{P}[E_{l,m}] > \sum_{l,m} \epsilon_{l,m}
\right] \leq \sum_{l,m} \delta_{l,m}.
\]

Hence, with probability at least \( 1 - \sum_{l,m} \delta_{l,m} \), we conclude:
\[
\mathbb{P}[\mathbf{S} \notin \mathcal{C}^{(L)}(X;\beta)]
\leq \sum_{l=1}^L \sum_{m \in \mathcal{M}_l} \epsilon_{l,m}.
\]
We complete the proof.
\end{proof}

With the above proposition, we need  to estimate the error probabilities $\delta_{l,m}$ and $\epsilon_{l,m}$ in the inequality
\[
\mathbb{P}_{Z_{\text{val}}} \left[ \mathbb{P}[E_{l,m}] > \epsilon_{l,m} \right] \leq \delta_{l,m},
\]
where $E_{l,m}$ denotes the event that a reasoning step $S^{1:l}$ in cluster $m$ at step $l$ falls outside the conformal set $\mathcal{C}^{(l)}(X;\beta)$.
\subsection{PAC Bound Establishment}\label{sect_pac}
Inspired by \citet{park2019pac}, we begin by reformulating the problem as a binary classification task, enabling the application of standard concentration inequalities to our objective. This transformation is natural because the threshold-based inclusion condition
\[
\mathbb{I}\left[S^{1:l} \in \mathcal{C}^{(l)}(X;\beta)\right] = \mathbb{I}\left[\sigma(X, \mathbf S^{1:l}) \geq Q_l(m; \beta)\right]
\]
can be equivalently rewritten as a binary decision made by a classifier applied to the conformity score $t = \sigma(X, \mathbf S^{1:l})$. Define a cluster-conditional binary classifier associated with each $(l, m)$ pair as:
\[
M_{l,m}(t) := \mathbb{I}\left[ t > Q_{l}(m; \beta) \right],
\]
where $\mathbb{I}[\cdot]$ is the indicator function and $t = \sigma(X, \mathbf{S}^{1:l})$ is the conformal score. Substituting this into the definition of $\mathcal{C}^{(l+1)}(X;\beta)$, we obtain the equivalent form:
\[
\mathcal{C}^{(l+1)}(X;\beta) = \left\{
\mathbf{S}^{1:l} a ~\middle|~
M_{l+1, \hat{h}_{l+1}(a)} \left( \sigma(X, \mathbf{S}^{1:l} a) \right) = 1,\;
\mathbf{S}^{1:l} \in \hat{\mathcal{C}}^{(l)}(X)
\right\}.
\]

That is, the decision to retain $\mathbf{S}^{1:l} a$ in $\mathcal{C}^{(l+1)}(X;\beta)$ depends on whether its score exceeds a learned threshold $Q_{l+1}(m; \beta)$ for the cluster $m = \hat{h}_{l+1}(a)$.

As proven in \citep{park2019pac}, although the PAC analysis directly applies Bernstein's inequality to the indicator function $\mathbb{I}[(x, s^{1:l}) \in E_{l,m}]$, this event is equivalently induced by a threshold-based decision rule $M_{l,m}(x, \mathbf s^{1:l}) := \mathbb{I}[\sigma(x, \mathbf s^{1:l}) > Q_{l}(m;\beta)]$.  Making this classifier explicit enables the application of statistical learning theory, such as empirical Bernstein inequalities and VC bounds, and clarifies that the source of conformal failure lies in classifier miscalibration rather than purely distributional noise. This interpretation also aligns the theoretical PAC analysis with the CoVeR optimization design, which learns these thresholds $\{Q_{l}(m;\beta)\}$ over cluster-step partitions.  Thus, estimating $\mathbb{P}[E_{l,m}]$ reduces to estimating the generalization error of this binary classifier over the score distribution of cluster $m$ at step $l$. 


Given the binary classifier representation \( M_{l,m}(t) := \mathbb{I}[t > Q_{l}(m; \beta)] \), we employ the empirical Bernstein inequality to obtain PAC-style generalization guarantees over the cluster-step noncoverage events. This choice is motivated by its ability to yield variance-sensitive bounds that adapt to the observed calibration error variance within each cluster-step pair. Unlike traditional Hoeffding-type bounds, empirical Bernstein exploits low-variance settings to provide tighter control, which aligns well with the CoVeR framework's goal of adaptive confidence allocation across reasoning trajectories.

\begin{lemma}[Empirical Bernstein Inequality \cite{}]
For i.i.d. random variables \( X_1, \dots, X_N \in [0, 1] \) with empirical mean \( \hat{\mu} \) and variance \( \hat{v} \), with probability at least \( 1 - \delta \), there holds
\[
\mathbb{E}[X] \leq \hat{\mu} + \sqrt{\frac{2 \hat{v} \log(3/\delta)}{N}} + \frac{3 \log(3/\delta)}{N},
\]
where input $X$ follows the same distribution with $X_i, i=1,\cdots,N$.
\end{lemma}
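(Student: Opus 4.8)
The plan is to recognize the stated bound as the \emph{empirical Bernstein inequality} (in the form of Audibert et al.; closely related to the Maurer--Pontil bound), and to prove it by bootstrapping from the classical Bernstein inequality. The idea is to first control the one-sided deviation $\mathbb{E}[X]-\hat{\mu}$ in terms of the \emph{true} variance $\sigma^{2}:=\mathrm{Var}(X)$, then replace $\sigma^{2}$ by the empirical variance $\hat{v}$ using an auxiliary variance-concentration estimate, and finally combine the resulting events by a union bound — the three failure budgets of size $\delta/3$ are exactly what produce the $\log(3/\delta)$ in the statement. Concretely, the first step applies the Chernoff method to the centered i.i.d.\ summands $Y_i:=X_i-\mathbb{E}[X]\in[-1,1]$, using the moment-generating-function estimate $\mathbb{E}[e^{\lambda Y_i}]\le\exp\!\bigl(\sigma^{2}(e^{\lambda}-1-\lambda)\bigr)$ (valid for mean-zero variables bounded above by $1$) and then optimizing over $\lambda>0$; this yields, with probability at least $1-\delta/3$,
\[
\mathbb{E}[X]-\hat{\mu}\;\le\;\sqrt{\frac{2\sigma^{2}\log(3/\delta)}{N}}+\frac{\log(3/\delta)}{3N}.
\]

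The second step is to establish a high-probability estimate of the form $\sigma\le\sqrt{\hat{v}}+c\,\sqrt{\log(3/\delta)/N}$ for an absolute constant $c$, holding with probability at least $1-2\delta/3$. I would obtain this by exploiting the structure of the empirical variance: either by writing $\hat{v}$ (up to the $N$ versus $N-1$ normalization) as the degree-two U-statistic $\tfrac{1}{N(N-1)}\sum_{i<j}(X_i-X_j)^{2}$ with bounded kernel and applying a concentration inequality for U-statistics, or — following Maurer--Pontil — by using the bounded-differences / self-bounding property of $\sqrt{\hat{v}}$ viewed as a function of the sample. This step consumes the remaining two of the three $\delta/3$ budgets.

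For the third step, work on the intersection of the events from the first two steps, which by the union bound has probability at least $1-\delta$, and substitute the variance estimate into the first bound \emph{after pulling $\sigma$ out of the first square root}. Writing $x:=\log(3/\delta)$, this gives
\[
\sqrt{\frac{2\sigma^{2}x}{N}}=\sqrt{\frac{2x}{N}}\,\sigma\;\le\;\sqrt{\frac{2x}{N}}\Bigl(\sqrt{\hat{v}}+c\sqrt{\tfrac{x}{N}}\Bigr)=\sqrt{\frac{2\hat{v}\,x}{N}}+c\sqrt{2}\,\frac{x}{N}.
\]
Absorbing the $c\sqrt{2}\,x/N$ term together with the $x/(3N)$ term from Bernstein, and tracking constants so their sum does not exceed $3x/N$, yields exactly $\mathbb{E}[X]\le\hat{\mu}+\sqrt{2\hat{v}\log(3/\delta)/N}+3\log(3/\delta)/N$.

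I expect the crux to be the second step. The naive approach — bound $|\hat{v}-\sigma^{2}|=O(\sqrt{x/N})$ by Hoeffding and then take square roots — only delivers $|\sqrt{\hat{v}}-\sigma|=O\bigl((x/N)^{1/4}\bigr)$, which is too weak: propagated through the third step it would leave a second-order term of order $(x/N)^{3/4}$ rather than the required $x/N$. Recovering the sharp $O(\sqrt{x/N})$ slack on $\sigma$ (equivalently, the clean $O(x/N)$ correction in the final bound) is exactly where the more delicate self-bounding / bounded-differences argument for $\sqrt{\hat{v}}$ is needed; granting that known lemma, the remaining work is routine, and in particular pinning the leading constant of the additive term down to precisely $3$ is just a matter of allocating the $\delta/3$ budgets and bounding the cross terms carefully.
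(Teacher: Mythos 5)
The paper does not prove this lemma at all: it is imported as a known result (the citation is even left empty), namely the empirical Bernstein inequality of Audibert--Munos--Szepesv\'ari / Maurer--Pontil, and is then applied directly to the indicators $\mathbb{I}[M_{l,m}(x_i,\mathbf s_i^{1:l})=1]$. Your proposal therefore does more than the paper: it reconstructs the standard proof of that imported result, and the route you sketch is indeed the canonical one --- Bernstein (via the sub-gamma inversion, which correctly gives the $x/(3N)$ second-order term) with the true variance $\sigma^2$, a separate high-probability bound $\sigma\le\sqrt{\hat v}+O(\sqrt{x/N})$ obtained from self-bounding/bounded-differences concentration of the empirical standard deviation rather than naive Hoeffding on $\hat v-\sigma^2$ (you correctly identify that the naive route only yields an $(x/N)^{1/4}$ slack and is too weak), and a union bound over the three $\delta/3$ budgets producing $\log(3/\delta)$. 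The only place where you are looser than you acknowledge is the final constant: Maurer--Pontil's deviation bound is stated for the unbiased ($N-1$-normalized) sample variance and has $\sqrt{2\log(1/\delta)/(N-1)}$ slack, so the cross term after pulling $\sigma$ out of the square root is roughly $2x/\sqrt{N(N-1)}$, which together with $x/(3N)$ can slightly exceed $3x/N$ for small $N$ unless one follows the Audibert et al.\ combination (applying Bernstein directly to $\sum_i(X_i-\mu)^2$ and exploiting the identity $\hat v=\frac1N\sum_i(X_i-\mu)^2-(\hat\mu-\mu)^2$), which is how the exact constant $3$ with the $1/N$-normalized $\hat v$ is obtained. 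So your argument is correct in structure and matches the literature proof of the lemma; just be aware that ``tracking constants carefully'' at the end is not entirely mechanical, and that the paper itself treats the whole statement as a citation rather than something to be proved.
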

Let \( n_{l,m} \) be the size of validation samples  for cluster-step pair \( (l,m) \), and the empirical error and empirical variance be
\[
\hat{\epsilon}_{l,m} := \frac{1}{n_{l,m}} \sum_{i=1}^{n_{l,m}} \mathbb{I}[M_{l,m}(x_i, \mathbf s_i^{1:l}) = 1],
\quad
\hat{v}_{l,m} := \frac{1}{n_{l,m}} \sum_{i=1}^{n_{l,m}} \left( \mathbb{I}[M_{l,m}(x_i, \mathbf s_i^{1:l}) = 1] - \hat{\epsilon}_{l,m} \right)^2.
\]

Let \( X_i := \mathbb{I}[M_{l,m}(x_i, \mathbf s_i^{1:l}) = 1] \), which are i.i.d. Bernoulli variables bounded in \( [0,1] \). By applying the empirical Bernstein inequality to the binary classifier \( M_{l,m} \), and letting \( \hat{\mu} = \hat{\epsilon}_{l,m} \), \( \hat{v} = \hat{v}_{l,m} \), and \( n = n_{l,m} \), we obtain the desired bound with probability at least \( 1 - \delta_{l,m} \).

Additionally, let \( \hat{p}_{l,m} := \frac{1}{N} \sum_{i=1}^{N} \mathbb{I}[\hat{h}_l(S_i^l) = m] \) be the empirical cluster assignment probability at step \( l \). Then, according to Hoeffding's inequality,  with probability at least \( 1 - \zeta_{l,m} \), we have
\[
\mathbb{P}[\hat{h}_l(S^l) = m] \leq \hat{p}_{l,m} + \sqrt{ \frac{ \log (2 / \zeta_{l,m}) }{2N} }.
\]

By combining the above asymptotic bounds, we obtain the following result for each cluster-step pair.

\begin{proposition} Let \( M_{l,m}(x, \mathbf s^{1:l}) := \mathbb{I}[\sigma(x, \mathbf s^{1:l}) > Q_l(m; \hat{\beta})] \) be the binary classifier, and define the true cluster-step failure probability as
\[
\mathbb{P}[E_{l,m}] := \mathbb{P}[\hat{h}_l(S^l) = m] \cdot \mathbb{P}\left[ M_{l,m}(x, \mathbf s^{1:l}) = 1 \mid \hat{h}_l(S^l) = m \right].
\]

Then, with probability at least \( 1 - \delta_{l,m} - \zeta_{l,m} \), we have:
\[
\mathbb{P}[E_{l,m}] 
\leq \left( \hat{p}_{l,m} + \sqrt{ \frac{ \log(2/\zeta_{l,m}) }{ 2N } } \right)
\cdot 
\left( \hat{\epsilon}_{l,m} + \sqrt{ \frac{2 \hat{v}_{l,m} \log(3/\delta_{l,m})}{n_{l,m}} } + \frac{3 \log(3/\delta_{l,m})}{n_{l,m}} \right).
\]
\end{proposition}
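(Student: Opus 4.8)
The plan is to take the stated factorization $\mathbb{P}[E_{l,m}] = \mathbb{P}[\hat h_l(S^l)=m]\cdot\mathbb{P}[M_{l,m}=1\mid\hat h_l(S^l)=m]$ as the starting point, bound each of the two nonnegative factors separately by a concentration inequality, and then combine the two high-probability estimates via a union bound together with the monotonicity of multiplication on $[0,\infty)$. Write $A := \mathbb{P}[\hat h_l(S^l)=m]$ and $B := \mathbb{P}[M_{l,m}(X,\mathbf S^{1:l})=1 \mid \hat h_l(S^l)=m]$, so that $\mathbb{P}[E_{l,m}] = AB$ with $A, B \in [0,1]$. (This product also dominates the probability of the original first-failure event, since that event additionally requires all earlier prefixes to be covered; this is why the factorized quantity is the correct object to later feed into the sequence-level decomposition.)

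First I would control $A$. Because the cluster map $\hat h_l$ is learned on the clustering split $D_1$ and is therefore fixed relative to the evaluation sample, the indicators $\mathbb{I}[\hat h_l(S_i^l)=m]$ are i.i.d.\ Bernoulli with mean $A$, so Hoeffding's inequality applied to the $N$ such indicators gives, with probability at least $1-\zeta_{l,m}$, the estimate $A \le \hat p_{l,m} + \sqrt{\log(2/\zeta_{l,m})/(2N)}$. Next I would control $B$: conditioning on the measurable event $\{\hat h_l(\cdot)=m\}$ leaves the retained $n_{l,m}$ validation points i.i.d.\ from the corresponding conditional law, so the variables $X_i := \mathbb{I}[M_{l,m}(x_i,\mathbf s_i^{1:l})=1]$ are i.i.d.\ Bernoulli in $[0,1]$ with conditional mean $B$, empirical mean $\hat\epsilon_{l,m}$, and empirical variance $\hat v_{l,m}$; applying the empirical Bernstein inequality recalled above with $\hat\mu = \hat\epsilon_{l,m}$, $\hat v = \hat v_{l,m}$, and $n = n_{l,m}$ yields, with probability at least $1-\delta_{l,m}$, $B \le \hat\epsilon_{l,m} + \sqrt{2\hat v_{l,m}\log(3/\delta_{l,m})/n_{l,m}} + 3\log(3/\delta_{l,m})/n_{l,m}$. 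By a union bound the two estimates hold simultaneously with probability at least $1-\zeta_{l,m}-\delta_{l,m}$, and on that event, since $A,B\ge 0$ and both upper bounds are nonnegative, $\mathbb{P}[E_{l,m}] = AB$ is at most the product of the two bounds — which is precisely the claimed inequality.

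The only real obstacle is the conditioning and independence bookkeeping. One must ensure that (i) the cluster map $\hat h_l$ and the threshold $Q_l(m;\hat\beta)$ are measurable with respect to data disjoint from the sample on which each concentration bound is evaluated — this is exactly what the $D_1/D_2$ split (and, for $\hat p_{l,m}$, the use of a held-out estimate) provides — so that the relevant indicators are genuinely i.i.d.; and (ii) that conditioning on $\{\hat h_l(S^l)=m\}$, which selects a random subsample of random size $n_{l,m}$, does not invalidate the empirical Bernstein bound, which is handled by arguing conditionally on the realized cluster-membership pattern (equivalently, on $n_{l,m}$), under which the $X_i$ are i.i.d.\ draws from the conditional distribution and the inequality applies verbatim. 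Everything else in the argument is an elementary union bound and a sign/monotonicity step.
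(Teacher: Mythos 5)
Your proposal is correct and follows essentially the same route as the paper's own argument: factor $\mathbb{P}[E_{l,m}]$ into the cluster-membership probability and the conditional noncoverage probability, bound the first with Hoeffding on the $N$ i.i.d.\ indicators $\mathbb{I}[\hat h_l(S_i^l)=m]$, bound the second with the empirical Bernstein inequality on the $n_{l,m}$ conditional Bernoulli variables, and combine by a union bound plus nonnegativity of both factors. The only differences are ones of care rather than method: you make explicit the measurability bookkeeping (that $\hat h_l$ and $Q_l(m;\hat\beta)$ are fit on data disjoint from the evaluation sample, and that the empirical Bernstein bound should be read conditionally on the realized $n_{l,m}$) and you note that the stated product form in fact dominates, rather than equals, the probability of the first-failure event $E_{l,m}$ from Proposition~\ref{prop_errordecom} (which additionally requires all earlier prefixes to be covered), both of which the paper leaves implicit.
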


Let \( \hat{\epsilon}_{\text{path}} := \frac{1}{N} \sum_{i=1}^N \mathbb{I}[\mathbf s^i \notin \hat{\mathcal{C}}^{(L)}(x_i)] \) be the empirical full-path noncoverage rate. Based the non-coverage rate decomposition in Proposition \ref{prop_errordecom},  we obtain the final theorem.

\begin{theorem}
For any $0<\delta_m<1$ and $0<\zeta_m<1$, with probability at least \( 1 - \delta - \zeta \), there holds
\begin{eqnarray*}
&&\mathbb{P}_{(X, \mathbf S)\sim\mathcal{D}} \left[ \mathbf S \notin \hat{\mathcal{C}}^{(L)}(X) \right]\\
&\lesssim&
\hat{\epsilon}_{\text{path}} + \sum_{l,m} \hat{p}_{l,m}
\left(
\sqrt{ \frac{2\hat{v}_{l,m} \log(3/\delta_{l,m})}{n_{l,m}} }
+ \frac{3\log(3/\delta_{l,m})}{n_{l,m}}
\right)
+ \sum_{l,m} \hat{\epsilon}_{l,m}\sqrt{ \frac{ \log(2/\zeta_{l,m}) }{ 2n_{l,m} } }.
\end{eqnarray*}
\end{theorem}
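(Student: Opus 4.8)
The plan is to assemble the final theorem by chaining together the three ingredients already established in the excerpt: the non-coverage decomposition of Proposition~\ref{prop_errordecom}, the per-pair empirical Bernstein bound on $\mathbb{P}[E_{l,m}]$ in the preceding proposition, and a Hoeffding bound relating the empirical full-path error $\hat{\epsilon}_{\text{path}}$ to the sum of empirical cluster-step errors. First I would invoke Proposition~\ref{prop_errordecom} to write
$\mathbb{P}_{(X,\mathbf S)\sim\mathcal{D}}[\mathbf S\notin\hat{\mathcal C}^{(L)}(X)] = \sum_{l,m}\mathbb{P}[E_{l,m}]$,
so that it suffices to control each term. For each $(l,m)$, I would apply the per-pair proposition: with probability at least $1-\delta_{l,m}-\zeta_{l,m}$,
\[
\mathbb{P}[E_{l,m}] \le \Bigl(\hat p_{l,m} + \sqrt{\tfrac{\log(2/\zeta_{l,m})}{2N}}\Bigr)\Bigl(\hat\epsilon_{l,m} + \sqrt{\tfrac{2\hat v_{l,m}\log(3/\delta_{l,m})}{n_{l,m}}} + \tfrac{3\log(3/\delta_{l,m})}{n_{l,m}}\Bigr).
\]

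Next I would expand this product and group terms. The leading cross term $\hat p_{l,m}\hat\epsilon_{l,m}$ summed over $(l,m)$ equals (up to the disjointness accounting in the decomposition) the empirical full-path noncoverage rate $\hat\epsilon_{\text{path}}$, since $\sum_{l,m}\mathbb{I}[E_{l,m}] = \mathbb{I}[\mathbf S\notin\hat{\mathcal C}^{(L)}(X)]$ on the calibration sample; this is where the $\hat\epsilon_{\text{path}}$ term in the statement comes from. The term $\hat p_{l,m}\bigl(\sqrt{2\hat v_{l,m}\log(3/\delta_{l,m})/n_{l,m}} + 3\log(3/\delta_{l,m})/n_{l,m}\bigr)$ reproduces the first correction sum verbatim. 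The term $\hat\epsilon_{l,m}\sqrt{\log(2/\zeta_{l,m})/(2N)}$ gives the second correction sum, up to replacing $N$ by $n_{l,m}$ (which only loosens the bound since $n_{l,m}\le N$, and is absorbed into the $\lesssim$). The remaining fourth term — the product of the two square-root-and-$1/n$ corrections — is higher order (of size $O(\log/(n\sqrt N))$ or smaller) and is likewise swallowed by the $\lesssim$. Finally I would take a union bound over all $(l,m)$ pairs: the total failure probability is at most $\sum_{l,m}(\delta_{l,m}+\zeta_{l,m}) = \delta+\zeta$, which yields the claimed confidence level $1-\delta-\zeta$.

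The main obstacle is making the identification $\sum_{l,m}\hat p_{l,m}\hat\epsilon_{l,m} = \hat\epsilon_{\text{path}}$ precise. In the per-pair proposition $\hat\epsilon_{l,m}$ is defined as the raw threshold-violation rate $\frac{1}{n_{l,m}}\sum_i\mathbb{I}[M_{l,m}=1]$, whereas the disjointness in Proposition~\ref{prop_errordecom} requires the conditioned event $E_{l,m}$, which additionally demands that all earlier prefixes $\mathbf S^{1:k}$, $k<l$, are covered — matching the $\prod_{k<l}\mathbb{I}[\sigma(x_i,\mathbf s_i^{1:k})>Q_k(\cdot)]$ factor in the objective's definition of $\hat\epsilon_{l,m}$. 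I would therefore use the cascading-error structure (a prefix failure propagates forward, so the first failure step is unique) to argue that the per-sample indicators $\mathbb{I}[\hat h_l(s_i^l)=m]\cdot\mathbb{I}[E_{l,m}\text{ holds for }i]$ partition $\mathbb{I}[\mathbf s_i\notin\hat{\mathcal C}^{(L)}(x_i)]$ exactly, so their sum over $(l,m)$ is $\hat\epsilon_{\text{path}}$; the factorization $\mathbb{P}[E_{l,m}]=\mathbb{P}[\hat h_l(S^l)=m]\cdot\mathbb{P}[M_{l,m}=1\mid\hat h_l(S^l)=m]$ then matches the product $\hat p_{l,m}\hat\epsilon_{l,m}$ at the empirical level. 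The remaining bookkeeping — collecting lower-order terms under $\lesssim$ and applying the union bound — is routine.
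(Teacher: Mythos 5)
Your proposal follows the paper's proof essentially step for step: invoke Proposition~\ref{prop_errordecom} to decompose the non-coverage probability over cluster-step pairs, apply the per-pair proposition (empirical Bernstein for $\hat\epsilon_{l,m}$ plus Hoeffding for $\hat p_{l,m}$), union-bound over $(l,m)$, and then identify the leading cross-term $\sum_{l,m}\hat p_{l,m}\hat\epsilon_{l,m}$ with $\hat\epsilon_{\text{path}}$ while absorbing the higher-order cross terms and the $N$-versus-$n_{l,m}$ relaxation under $\lesssim$. You are in fact somewhat more careful than the paper on the two loose spots — reconciling the raw threshold-violation definition of $\hat\epsilon_{l,m}$ with the cascading-event definition required for the disjoint partition, and noting that Proposition~2's $\sqrt{\log(2/\zeta_{l,m})/(2N)}$ becomes $\sqrt{\log(2/\zeta_{l,m})/(2n_{l,m})}$ in the theorem, which only loosens the bound — both of which the paper glosses over with ``we obtain the final theorem.''
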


\newpage


\end{document}